\let\cref\Cref
\newtheorem{problm}{Problem}
\theoremstyle{definition}
\crefname{problm}{Problem}{Problems}
\theoremstyle{remark}
\renewcommand{\epsilon}{\varepsilon}
\renewcommand{\phi}{\varphi}
\newcommand{\Abs}[1]{\left| #1 \right|}
\newcommand{\Scon}{\mathbb{S}}
\newcommand{\Ocon}{\mathbb{O}}
\DeclareMathOperator{\Pclass}{P}
\DeclareMathOperator{\Ext}{Ext}
\DeclarePairedDelimiter\abs{\lvert}{\rvert}%
\begin{document}

\title{Automatic Textual Explanations of Concept Lattices}

\date{\today}

\author{Johannes Hirth \inst{1,3}\orcidID{0000-0001-9034-0321}
  \and Viktoria Horn\inst{2,3}
  \and Gerd Stumme\inst{1,3}
  \and Tom Hanika\inst{1,3,4,5}
}
\institute{%
 Knowledge \& Data Engineering Group,
 University of Kassel, Germany\\[0.5ex]
 \and
 Partizipative IT-Gestaltung,
 University of Kassel, Germany\\[0.5ex]
 \and
 Interdisciplinary Research Center for Information System Design\\
 University of Kassel, Germany\\
 \and
 School of Library and Information Science,\\
 Humboldt-Universität zu Berlin, Berlin, Germany
 \and
 Institute of Computer Science,
 University of Hildesheim, Germany\\
 \email{$\{$hirth,stumme$\}$@cs.uni-kassel.de},
 \email{viktoria.horn@uni-kassel.de},
 \email{tom.hanika@uni-hildesheim.de}
}
\maketitle


\begin{abstract}
  Lattices and their order diagrams are an essential tool for
  communicating knowledge and insights about data. This is in
  particular true when applying Formal Concept Analysis.  Such
  representations, however, are difficult to comprehend by untrained
  users and in general in cases where lattices are large.  We tackle
  this problem by automatically generating textual explanations for
  lattices using standard scales. Our method is based on the general
  notion of \emph{ordinal motifs} in lattices for the special case of
  standard scales.  We show the computational complexity of
  identifying a small number of standard scales that cover most of the
  lattice structure. For these, we provide textual explanation
  templates, which can be applied to any occurrence of a scale in any
  data domain. These templates are derived using principles from
  human-computer interaction and allow for a comprehensive textual
  explanation of lattices. We demonstrate our approach on the spices
  planner data set, which is a medium sized formal context comprised
  of fifty-six meals (objects) and thirty-seven spices
  (attributes). The resulting 531 formal concepts can be covered by
  means of about 100 standard scales.
\end{abstract}

\keywords{Ordered Sets, Explanations, Formal~Concept~Analysis, Closure~System, Conceptual~Structures}

\section{Introduction} 
There are several methods for the analysis of relational data. One
such method is Formal Concept Analysis~\cite{fca-book} (FCA). The
standard procedure in the realm of FCA is to compute the concept
lattice, i.e., a data representation on the ordinal level of
measurement~\cite{measurement-levels}. Ordered data structures are
comparatively more comprehensible for users than, e.g., Euclidean
embeddings. Nevertheless, untrained users may have difficulties in
grasping knowledge from lattices (and lattice diagrams). Moreover,
even trained users cannot cope with lattice structures of large
size. In addition, there are up until now only rudimentary methods to
derive basic meaning of lattices that are of standard
scale~\cite[Figure 1.26]{fca-book}.

A meaningful approach to cope with both issues is to employ more
complex ordinal patterns, e.g., scales composed from standard
scales. A recent result by \textcite{ordinal-motif} allows for the
efficient recognition of such patterns, there called as \emph{ordinal
  motifs}. Based on these we propose a method to automatically
generate textual explanations of concept lattices.  For the
recognition of ordinal motifs we employ \emph{scale-measures}, i.e.,
continuous maps between closure spaces. These are able to analyze
parts of a conceptual structure with respect to a given set of scale
contexts.  While this approach is very expressive there may be
exponentially many scale-measures. Therefore we introduce an
importance measure of ordinal motifs based on the proportion of the
conceptual structure that they reflect. With this our method can
identify a small number of ordinal motifs that covers most of the
concept lattice. 

An advantage of employing sets of standard scales is their well-known
structural semantic, cf. \emph{basic meaning} Figure 1.26
\cite{fca-book}. Based on this we constructed for every standard scale
textual templates based on principle from human computer
interaction. In detail we applied the five goodness
criteria~\cite{mamun2021assessing} for explainability in machine
learning to ensure that the textual templates are human
comprehensible. 

Besides our theoretical investigations we provide an experimental
example on a real world data set of medium size. All proposed methods
are implemented in conexp-clj~\cite{conexp-clj}, a research tool for
Formal Concept Analysis. Our approach is not only beneficial for
untrained users but also provides explanations of readable size for
concept lattices that are too large even for experienced users.

\section{Formal Concept Analysis}
Throughout this paper we presume that the reader is familiar with
standard FCA notation~\cite{fca-book}. In addition to that, for a
formal context $\context\coloneqq (G,M,I)$ we denote by
$\context{[H,N]}\coloneqq (H,N,I\cap H\times N)$ the induced
sub-context for a given set of objects $H\subseteq G$ and attributes
$N\subseteq M$. If not specified differently the lift of a map
$\sigma: G_1 \to G_2$ on $\mathcal{P}(G_1)\to \mathcal{P}(G_2)$ is
defined as $\sigma(A) \coloneqq \{\sigma(a)\mid a\in A\}$ where
$A\subseteq G_1$. The second lift to
$\mathcal{P}(\mathcal{P}(G_1))\to \mathcal{P}(\mathcal{P}(G_2))$ is
defined as
$\sigma(\mathcal{A}) \coloneqq \{\sigma(A) \mid A\in \mathcal{A} \}$
for $\mathcal{A}\subseteq \mathcal{P}(G_1)$. For a closure system $\mathcal{A}$ on
$G$ we call $\mathcal{D}$ a finer closure system, denoted
$\mathcal{A}\leq\mathcal{D}$, iff $\mathcal{D}$ is a closure system on
$G$ and $\mathcal{A}\subseteq\mathcal{D}$. In this case 
$\mathcal{A}$ is coarser than $\mathcal{D}$. We call $\mathcal{D}$ a
sub-closure system of $\mathcal{A}$ iff $\mathcal{D}$ is a closure
system on $H\subseteq G$ and
$\mathcal{D}=\{H\cap A\mid A\in \mathcal{A}\}$.

Note that there are other definitions for sub-closure systems in the
literature \cite{smeasure}.

\section{Recognizing Ordinal Motifs of Standard Scale}
For the generation of textual explanations we recognize parts of the
concept lattice that match an ordinal motif, i.e., are isomorphic to a
standard scale. For this task we employ (full) scale-measures as
introduced in the following.

\begin{definition}[Scale-Measure (Definition 91 \cite{fca-book})]
  For two formal contexts $\context,\ \Scon$ a map
  $\sigma: G_{\context}\to G_{\Scon}$ is a \emph{scale-measure} iff for all
  $A\in \Ext(\Scon)$ the pre-image $\sigma^{-1}(A)\in
  \Ext(\context)$. A scale-measure is \emph{full} iff
  $\Ext(\context) = \sigma^{-1}(\Ext(\Scon))$.
\end{definition}

We may note that we use a characterization for full scale-measures
(Definition 91 \cite{fca-book}) which can easily be deduced.  The
existence of a scale-measure from a context $\context$ into a scale
context $\Scon$ implies that the conceptual structure of the image of
$\sigma$ in $\Scon$ is entailed in $\BV(\context)$.  Thus, if we are
able to explain $\Scon$ we can derive a \emph{partial} explanation of
$\context$.  In contrast, for full scale-measures we can derive an
\emph{exact} explanation (up to context isomorphism) of $\context$.
Obviously, both scale-measures and full scale-measures differ in their
\emph{coverage} of $\Ext(\context)$, i.e., partial and exact. However,
both morphisms are defined on the entire set of objects $G$ of
$\context$ and are therefore \emph{global scope}. 

Even though global explanations are the gold standard for explainable
artificial intelligence, they often elude from human comprehensibility
due to their size. Therefore we divide the problem of deriving a
single global explanation into multiple local explanations. 
To \emph{locally} describe a part of context $\context$ a
generalization of scale-measures is introduced in
\textcite{ordinal-motif}.

\begin{definition}[Local Scale-Measures \cite{ordinal-motif}]
  For two contexts $\context,\ \Scon$ a map $\sigma:H\to G_{\Scon}$ is
  a \emph{local scale-measure} iff $H\subseteq G_{\context}$ and
  $\sigma$ is a scale-measure from $\context{[H,M]}$ to $\Scon$. We
  say $\sigma$ is \emph{full} iff $\sigma$ is a full scale-measure
  from $\context{[H,M]}$ to $\Scon$.
\end{definition}

Based on this we want to construct in the following templates for
textual explanations. The basis for these templates are standard
scales. Given a context $\context$, a local (full) scale-measure
$\sigma$ of $\context$ into $\Scon$ we can replace every instance of
an object $g\in G_{\Scon}$ in a textual explanation template of
$\Scon$ by its pre-image $\sigma^{-1}(g)\subseteq G$. This yields a
textual explanation of $\context$ with respect to $\sigma$.

For the standard scales, i.e., \emph{nominal}, \emph{ordinal},
\emph{interordinal}, \emph{crown} and \emph{contranominal}, we show
textual templates in \cref{sec:explain-templates}. These are designed
such that they can be universally applied in all settings.
Prior to discussing the textual templates we have to discuss how to
recognize standard scales in a given formal context.
The general ordinal motif recognition problem was introduced in
\textcite{ordinal-motif}. In this work the authors are only concerned
with the recognition of ordinal motifs based on scale-measures into
standard scales. Nonetheless, we recall the general problem for
enumerating scale-measures.

\begin{problm}[Recognizing Ordinal Motifs~\cite{ordinal-motif}]\label{problem:explain}
  Given a formal context $\context$ and an ordinal motif $\context[S]$
  find a surjective map from $\context$ into $\context[S]$
  that is:

  \begin{center}
    \begin{tabular}{>{\raggedright\hspace{0pt}\itshape}p{2cm}|>{\raggedright\hspace{10pt}}p{5cm}l}
      &$\mathrm{global}$&$\mathrm{local}$\\ \midrule
      $\mathrm{partial}$& scale-measure&  local scale-measure\\
      $\mathrm{full}$   &  full scale-measure & local full scale-measure\\
    \end{tabular}
  \end{center}
\end{problm}

\noindent
The underlying decision problem of \cref{problem:explain} has been
proven to be NP-complete~\cite{ordinal-motif}. In a moment we will
investigate a particular instance of this problem for standard
scales. But first we want to give the idea of how the recognition of
standard scales relates to the overall explanation task. 

In practice we consider families of standard scales for investigating
a given formal context $\context$ such that we have explanation
templates for each scale. Thus we have to solve the above problem for
a family of scale contexts $\mathcal{O}$. Moreover, usually we are not
only interested in a single scale-measure into a scale context
$\context[S]$ but all occurrences of them.

Fortunately, for all standard scales but the crown scales is the
existence of local full scale-measures hereditary with respect to
subsets of $H\subseteq G$. For example a context for which there
exists a local full scale-measure $\sigma:H\to G_{\Scon}$ into the
ordinal scale of size three does also allows for a local full
scale-measure into the ordinal scale of size two by restricting
$\sigma$ to two elements of $H$. Thus when enumerating all local full
scale-measures a large number of candidates do not need to be
considered.

Another meaningful restriction for the rest of this work is to
consider local full scale-measures only. Thus our methods focus on
local full explanations (cf. \cref{problem:explain}). Moreover, this
choice allows to mitigate the enumeration of all scale-measures. For a
family of standard scales of a particular type, e.g., the family of
all ordinal scales, let $\context[S]_{n}$ be the scale context of size
$n$.  We thus consider only the local full scale-measures
$\sigma: H\to G_{\Scon_{n}}$ of $\context$ where there is no local
full scale-measure $H\cup\{g\}\to G_{\Scon_{n+1}}$ from $\context$ to
$\context[S]_{n+1}$ with $g\in G, g\not\in H$.  For example, in case
that $H$ is of ordinal scale with respect to $\sigma$ we can infer
that all proper subsets of $H$ are of ordinal scale.  We remind the
reader that we only consider surjective maps
(cf. \cref{problem:explain}).

\begin{proposition}[Recognizing Standard Scales]
  Deciding whether for a given formal context $\context$ there is a full
  scale-measure into either standard scale
  $\context[N]_{n}$, $\context[O]_{n}$, $\context[I]_{n}$,
  $\context[C]_{n}$ or $\context[B]_{n}$ is in
  $\Pclass$.
\end{proposition}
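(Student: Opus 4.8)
The plan is to prove membership in $\Pclass$ separately for each of the five families of standard scales, in each case exhibiting a polynomial-time test. The common underlying idea is that a full scale-measure $\sigma\colon G_{\context}\to G_{\Scon_n}$ is, by definition, surjective and satisfies $\Ext(\context)=\sigma^{-1}(\Ext(\Scon_n))$; hence the number $n$ of points in the scale is bounded by $|G_{\context}|$, and the preimages $\sigma^{-1}(g)$ for $g\in G_{\Scon_n}$ form a partition of $G_{\context}$ into at most $|G|$ blocks. So instead of searching over all maps, one searches over suitable \emph{structures on $\context$} that such a partition would have to induce, and each such structure is computable in polynomial time from $\Ext(\context)$ (equivalently, from the object intents, which are polynomially many).

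First I would treat the easy closure-theoretic cases. For the nominal scale $\context[N]_n$ (antichain of $n$ concepts plus top and bottom), the existence of a full scale-measure is equivalent to $\context$ itself being nominal, i.e. $\Ext(\context)$ being the closure system generated by an antichain; this is checked by computing the object intents, merging objects with equal intent, and verifying that the resulting extents are exactly $G$, $\emptyset$, and $n$ pairwise-incomparable atoms — clearly polynomial. For the contranominal scale $\context[B]_n$ one checks whether $\Ext(\context)\cong\mathcal{P}([n])$; since a Boolean lattice on $n$ atoms has $2^n$ extents, one has $2^n=|\Ext(\context)|$, so $n=\log_2|\Ext(\context)|\le |G|$, and one verifies the Boolean structure directly. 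The same bound-then-verify scheme handles the ordinal scale $\context[O]_n$ (the induced quotient order on object-classes must be a chain of length $n$) and the interordinal scale $\context[I]_n$ (whose concept lattice is a grid $\mathbf{2}\times\cdots$; again the size of $\Ext(\context)$ pins down $n$ and the grid structure is then checked coordinatewise).

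The crown scale $\context[C]_n$ is the one genuinely delicate case, since — as the text notes — existence of local full scale-measures into crowns is \emph{not} hereditary, so the heredity short-cut used elsewhere is unavailable. Here the plan is to observe that a crown of size $n$ has a rigid concept lattice whose order diagram is a single $2n$-cycle, so a full scale-measure would force $|\Ext(\context)|$, and hence $n$, to be determined; one then checks whether $\BV(\context)$, with its at most $2n+2 \le 2|G|+2$ concepts, is isomorphic to the crown lattice $\BV(\context[C]_n)$ by matching atoms, coatoms and the covering relation between them. Because $n$ is polynomially bounded and the crown lattice has a fixed, easily-describable shape, this isomorphism test is again polynomial. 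The main obstacle I expect is precisely this crown case: one must argue carefully that no heredity is needed because fullness already fixes the size $n$ and the rigidity of the crown lattice makes the candidate bijection essentially unique up to the crown's dihedral automorphism group, so a brute check over those $O(n)$ automorphisms suffices.

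Finally I would note that all five tests reduce to (i) computing the object intents and the induced partition of $G$ into equal-intent classes, (ii) determining $n$ from $|\Ext(\context)|$ or from the number of object-classes, and (iii) verifying that the induced ordered structure on classes matches the fixed combinatorial template of the target scale — each step polynomial in $|G|$ and $|M|$ — which establishes the claim.
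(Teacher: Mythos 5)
Your treatment of the nominal, ordinal, interordinal and crown cases is sound and, modulo presentation, matches the paper's strategy: in each of these cases the target scale has only polynomially many extents, so the size parameter $n$ is polynomially bounded by the data, the candidate map is essentially forced (up to a small automorphism group), and one verifies a fixed combinatorial template. (The paper phrases this as constructing one or two canonical candidate maps --- e.g.\ the two orientations $\sigma_{\leq},\sigma_{\geq}$ read off the two-element extents in the interordinal case, or the ``drawing order'' in the crown case --- and then invoking the cited polynomial-time verification of full scale-measures; your atom/coatom/covering-relation matching is the same computation in lattice-theoretic clothing.)

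The genuine gap is the contranominal case. You propose to compute $\abs{\Ext(\context)}$, set $n=\log_2\abs{\Ext(\context)}$, and ``verify the Boolean structure directly.'' But when the answer is \emph{yes}, $\Ext(\context)$ has $2^{n}$ elements with $n$ up to $\abs{G}$, so both counting the extents and inspecting the Boolean lattice are exponential in the size of the input context $(G,M,I)$; counting extents of an arbitrary context is moreover \#P-hard, so there is no shortcut to the number $\abs{\Ext(\context)}$ either. This is exactly the case the paper flags as the interesting one (``recognition is polynomial \emph{also when the standard scale has exponentially many concepts}''). The paper's way around it never touches $\Ext(\context)$: since the object part of any automorphism of $\context[B]_{n}=([n],[n],\neq)$ can be an arbitrary bijection, after clarifying $\context$ a full scale-measure onto $\context[B]_{n}$ exists iff \emph{one arbitrary} bijection $G\to[n]$ (forcing $n=\abs{G}$) is a full scale-measure, and the verification of a single candidate full scale-measure is in $\Pclass$ by the cited result --- roughly, one only needs the $n$ attribute-extent preimages to be extents of $\context$ and a reduced-context comparison, never the full extent lattice. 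To repair your proof you must replace the ``enumerate $\Ext(\context)$ and check it is Boolean'' step by such a symmetry-plus-local-verification argument (equivalently: clarify and reduce $\context$ and test context isomorphism with $([n],[n],\neq)$). A smaller inaccuracy: the concept lattice of the interordinal scale is the lattice of intervals of $[n]$, not a product of chains, but since it still has only $O(n^{2})$ concepts your bound-then-verify scheme survives there.
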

\begin{proof}
  WLoG we assume that $\context$ is clarified. 

  For a contranominal scale $\mathbb{B}_{n}\coloneqq ([n],[n],\neq)$
  every pair of bijective maps $(\alpha:[n]\to[n],\ \beta:[n]\to [n])$
  is a context automorphism of $\mathbb{B}_{n}$ (\cite{fca-book}).
  Thus we can select an arbitrary mapping from $G$ into $[n]$ and
  check if it is a full scale-measure from $\context$ into the
  contranominal scale $\mathbb{B}_{n}$. The verification of full
  scale-measures is in $\Pclass$~\cite{ordinal-motif}. The same
  reasoning applies to nominal scales $\context[N]_{n}\coloneqq ([n],[n],=)$. 

  For ordinal scales we need to verify that for each pair of objects
  their object concepts are comparable. Hence, the recognition for
  ordinal scales is in $\Pclass$.

  For an interordinal scale
  $\mathbb{I}_{n}\coloneqq ([n],[n],\leq) | ([n],[n],\geq)$ we can
  infer from the extents of $\context$ of cardinality two two possible
  mappings $\sigma_{\leq},\sigma_{\geq}$ that are the only candidates
  to be a full scale-measure. For interordinal scales the extents of
  cardinality two overlap on one object each and form a chain. From
  said chain we can infer two order relations of the objects $G$ given
  by position in which they occur in the chain. From the total order
  on $G$ we can infer a mapping $\sigma_{\leq}:G\to [n]$ where the
  objects are mapped according to their position. We can construct
  $\sigma_{\geq}$ analogously by reversing the positions. All maps
  other than $\sigma_{\leq}$ and $\sigma_{\geq}$ would violate the
  extent structure of the chain. For $\sigma_{\leq}$ and
  $\sigma_{\geq}$ we can verify in $\Pclass$ either is a full
  scale-measure. Moreover, the extents of cardinality two can be
  computed in polynomial time using \texttt{TITANIC} or
  \texttt{next\_closure}. Hence, the recognition for interordinal
  scales is in $\Pclass$.

  For crown scales $\context[C]_{n}\coloneqq ([n],[n],J)$ where
  $(a,b)\in J \iff a=b \text{ or } (a,b)=(n,1) \text{ or } b=a+1$ we
  can select an arbitrary object $g\in [n]$ and select repeatedly
  without putting back a different $h\in [n]$ with
  $\{g\}'\cap \{h\}'\neq \{\}$. Starting from $g$ there is a unique
  drawing order. In order to find a full scale-measure we have to find
  an isomorphic drawing order for the elements of $G$ in the same
  manner. From this we can derive a map $G\to [n]$ with respect to the
  drawing order and verify if it is a full scale-measure. The
  computational cost of the drawing procedure as well as the
  verification is in $\Pclass$.
\end{proof}

We may note that our problem setting in the contranominal case is
related but different to the question by \textcite{contranominal} for
the largest contranominal scale of a context $\context$.

Once we can recognize standard scales we are able to provide
contextual explanations that are based on them.
One may extend the set of scale to non-standard scales, yet this may
be computationally intractable if they cannot be recognized in
polynomial time.

While we are able to decide if a context $\context$ is of crown scale,
it is NP-hard to decide if it allows for a surjective scale-measure
into a crown scale of size $\abs{G}$.

\begin{proposition}\label{lemma:n-crown}
  Deciding for a context $\context$ if there is a surjective
  scale-measure into a crown scale of size $\abs{G}$ is NP-hard.
\end{proposition}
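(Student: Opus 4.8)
The plan is to reduce from the \textsc{Hamiltonian Cycle} problem, which is NP-complete. The underlying combinatorial fact is that for $n\ge 4$ the extents of the crown scale $\context[C]_{n}$ are exactly the empty set, the $n$ singletons $\{a\}$, the $n$ cyclically adjacent pairs $\{a,\,a+1\bmod n\}$, and the full object set $[n]$; moreover $\context[C]_{n}$ has $n$ objects, so every surjection onto $[n]$ from an $n$-element object set is a bijection. Hence an $n$-object context (for $n\ge 4$) admits a surjective scale-measure into $\context[C]_{n}$ precisely when every singleton of the context is an extent and there is a cyclic arrangement $g_{1},\dots,g_{n},g_{1}$ of its objects with each $\{g_{i},\,g_{i+1\bmod n}\}$ an extent --- a Hamiltonicity condition.

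First I would set up the construction. Given a simple graph $H=(V,E)$ with $\abs{V}=n\ge 4$, put $\mathcal{F}_{H}\coloneqq\{\emptyset\}\cup\{\{v\}\mid v\in V\}\cup E\cup\{V\}$. Since any two distinct edges of a simple graph meet in at most one vertex, every intersection of members of $\mathcal{F}_{H}$ again lies in $\mathcal{F}_{H}$, so $\mathcal{F}_{H}$ is a closure system on $V$. Let $\context\coloneqq(V,\mathcal{F}_{H},\in)$ be the output of the reduction: objects are the vertices, attributes are the members of $\mathcal{F}_{H}$, and $v$ has attribute $C$ iff $v\in C$. By the standard correspondence between closure systems and contexts~\cite{fca-book}, $\Ext(\context)=\mathcal{F}_{H}$. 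This $\context$ is clarified (the attribute $\{v\}$ separates $v$ from every other object), has exactly $n$ objects, and is computable from $H$ in polynomial time.

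Next I would verify the equivalence. Since $\abs{G}=n$, a surjection $\sigma\colon V\to[n]$ is a bijection, and it is a scale-measure into $\context[C]_{n}$ iff $\sigma^{-1}(A)\in\mathcal{F}_{H}$ for every extent $A$ of $\context[C]_{n}$. Using the extent description above this amounts to: $\sigma^{-1}(\emptyset)=\emptyset$ and $\sigma^{-1}([n])=V$ (automatic); every $\sigma^{-1}(\{a\})$ is a singleton (always in $\mathcal{F}_{H}$); and every $\sigma^{-1}(\{a,\,a+1\bmod n\})$ lies in $\mathcal{F}_{H}$. Writing $v_{i}\coloneqq\sigma^{-1}(i)$, and using that for $n\ge 4$ the only two-element members of $\mathcal{F}_{H}$ are the edges of $H$, the last condition says exactly that $\{v_{i},\,v_{i+1\bmod n}\}\in E$ for all $i$, i.e.\ that $v_{1},\dots,v_{n},v_{1}$ is a Hamiltonian cycle of $H$. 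Conversely, a Hamiltonian cycle $v_{1},\dots,v_{n},v_{1}$ of $H$ yields such a $\sigma$ via $\sigma(v_{i})=i$. This proves the reduction correct, hence NP-hardness.

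I expect the real work to be two bookkeeping steps rather than any deep idea: (i) confirming that $\mathcal{F}_{H}$ is genuinely a closure system and is realized \emph{exactly} as $\Ext(\context)$, so that $\context$ has no spurious extents that could enable an unintended scale-measure; and (ii) pinning down $\Ext(\context[C]_{n})$ precisely, in particular that for $n\ge 4$ its only two-element extents are the $n$ cyclically adjacent pairs --- which uses that each object of $\context[C]_{n}$ carries exactly two attributes and that non-adjacent attribute pairs have disjoint object sets. The degenerate cases $n\le 3$ cause no trouble, since \textsc{Hamiltonian Cycle} is already NP-complete on graphs with at least four vertices.
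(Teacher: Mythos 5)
Your proposal is correct and follows essentially the same reduction as the paper: both map a graph $H=(V,E)$ to the context $(V,\{\{v\}\mid v\in V\}\cup E,\in)$ (yours adds $\emptyset$ and $V$ as attributes, which does not change the extent set) and observe that a surjective scale-measure into the crown scale of size $\abs{V}$ corresponds exactly to a Hamiltonian cycle. Your version is somewhat more careful than the paper's, spelling out the extent structure of $\context[C]_{n}$, the bijectivity of the surjection, and the $n\ge 4$ caveat, but the underlying argument is identical.
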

\begin{proof}
  To show the NP-hardness of this problem we reduce the Hamilton cycle
  (HC) problem for undirected graphs to it, i.e., for a graph $G$ is
  there a circle(-path) visiting every node of $G$ exactly ones. This
  problem is known to be NP-complete.

  For the reduction, we map the graph $G\coloneqq (V,E)$
  (WLoG $\abs{V}\geq 2$) to a formal context
  $\context\coloneqq (V,{\hat V} \cup E,\in)$ where
  $\hat V \coloneqq \{\{v\}\mid v\in V\}$. This map is polynomial in
  the size of the input. The set of extents of $\context$ is equal to
  ${\hat V} \cup E \cup \{V,\{\}\}$. 
  The context $\context$ accepts a surjective scale-measure into the
  crown scale of size $\abs{G}$ iff there is a sequence of
  extents of cardinality two $A_1,\dots,A_n$ of $\context$ such that
  $(V,\{A_1,\dots,A_n\})\leq G$ is a cycle visiting each object $v\in V$
  exactly ones. This is the case iff $G$ has a Hamilton cycle.
\end{proof}

First experiments~\cite{ordinal-motif} on a real world data set
with 531 formal concepts revealed that the number of local full
scale-measures into standard scales is too large to be humanly
comprehended. Thus we propose in the following section two
importance measures for selection approaches.

\section{Important Ordinal Motifs}\label{sec:explanation-coverage}
Our goal is to cover large proportions of a concept lattice
$\BV(\context)$ using a small set of scale-measures $\mathcal{S}$ into
a given set of ordinal motifs. We say a concept $(A,B)\in \BV(\context)$
is covered by $(\sigma,\context[S])\in \mathcal{S}$ iff it is
reflected by $(\sigma,\context[S])$, i.e., there exists an extent
$D\in \context[S]$ with $\sigma^{-1}(D)=A$.

The above leads to the formulation of the general \emph{ordinal motif
  covering} problem.

\begin{problm}[Ordinal Motif Covering Problem]\label{problem:motif-covering}
  For a context $\context$, a family of ordinal motifs $\mathcal{O}$
  and $k\in \mathbb{N}$, what is the largest number $c\in \mathbb{N}$
  such that there are surjective local full scale-measures
  $(\sigma_{1},\mathbb{O}_{1}),\dots,(\sigma_{k},\mathbb{O}_{k})$ of
  $\context$ with $\mathbb{O}_{1},\dots,\mathbb{O}_{k}\in \mathcal{O}$
  and
  \[\Abs{\bigcup_{1\leq i \leq k} (\varphi_{\context} \circ
    \sigma_{i}^{-1}) \bigl( \Ext(\mathbb{O}_{i}) \bigr) }~=~c\] where
  $\varphi_{\context}$ denotes the object closure operator of
  $\context$. If $\context$ does not allow for any scale-measure into
  an ordinal motif from $\mathcal{O}$ the value of $c$ is $0$.
\end{problm}

We call the set
$\{(\sigma_{1},\mathbb{O}_{1}),\dots,(\sigma_{k},\mathbb{O}_{k})\}$ an
ordinal motif covering of $\context$.

If one is able to find an ordinal motif covering that reflects all
formal concepts of $\context$ we can construct a formal context
$\context[O]$ which accepts a scale-measure $(\sigma,\context[S])$ if
and only if $(\sigma,\context[S])$ is a scale-measure of $\context$.

\begin{proposition}[Ordinal Motif Basis of $\context$]\label{prop:ordinal-motif-basis}
  Let $\context$ be a formal context with object closure operator
  $\varphi_{\context}$ and ordinal motif covering
  $\{(\sigma_1,\context[O]_1),\dots,(\sigma_{k},\context[O]_{k})\}$ that
  covers all concepts of $\context$, i.e.,
  $c=\abs{\BV(\context)}$. Let
  \[\mathbb{O}\coloneqq \mid_{1\leq i\leq k}
    (G, M_{\context[O]_{i}}, I_{\context[O]_{i},\varphi_{\context}}), \text{ with } (g,m)\in
    I_{\context[O]_{i},\varphi_{\context}}\iff g\in \varphi_{\context}\bigl(\sigma^{-1}_{i}(\{m\}^{I_{\context[O]_{i}}})\bigr)\]

  where $|$ is the context apposition. Then is a pair $(\sigma,\Scon)$ a local full
  scale-measure from $\context{[H,M]}$ to $\context[S]$ iff $\sigma$ is
  a local full scale-measure from $\Ocon[H,M_{\Ocon}]$ to
  $\context[S]$. In this case we call $\context[O]$ an \emph{ordinal motif
  basis} of $\context$.
\end{proposition}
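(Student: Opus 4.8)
The plan is to first reduce the claimed equivalence to the single closure-system identity $\Ext(\Ocon)=\Ext(\context)$, and then to prove that identity. For the reduction, note that for a fixed surjective map $\sigma\colon H\to G_{\context[S]}$ the property of being a local full scale-measure into $\context[S]$ from a subcontext of $\context$ (resp.\ of $\Ocon$) is exactly $\sigma^{-1}\bigl(\Ext(\context[S])\bigr)=\Ext(\context{[H,M]})$ (resp.\ $\sigma^{-1}\bigl(\Ext(\context[S])\bigr)=\Ext(\Ocon[H,M_{\Ocon}])$), and hence depends only on the extents of that subcontext and on $\Ext(\context[S])$. Since $\Ext(\context{[H,M]})=\{H\cap A\mid A\in\Ext(\context)\}$ and $\Ext(\Ocon[H,M_{\Ocon}])=\{H\cap A\mid A\in\Ext(\Ocon)\}$, the identity $\Ext(\Ocon)=\Ext(\context)$ makes the two subcontext systems coincide for every $H\subseteq G$, which yields the equivalence; conversely, instantiating the equivalence with $H=G$, $\context[S]=\context$ and $\sigma=\mathrm{id}_{G}$ forces $\Ext(\Ocon)=\Ext(\context)$, so no generality is lost.

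One inclusion is immediate and uses neither the covering hypothesis nor fullness. Every attribute extent of $\Ocon$ equals $\varphi_{\context}\bigl(\sigma^{-1}_{i}(\{m\}^{I_{\context[O]_{i}}})\bigr)$ and is therefore a $\varphi_{\context}$-closed subset of $G$, i.e.\ a member of $\Ext(\context)$. As $\Ext(\context)$ is a closure system on $G$ it contains $G$ and is closed under intersection, so it contains every extent of $\Ocon$; that is, $\Ext(\Ocon)\subseteq\Ext(\context)$.

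The substantive inclusion is $\Ext(\context)\subseteq\Ext(\Ocon)$, and this is where the hypothesis $c=\abs{\BV(\context)}$ enters. Fix $A\in\Ext(\context)$; since $\varphi_{\Ocon}(A)=\bigcap\{\mu\mid \mu\text{ an attribute extent of }\Ocon,\ A\subseteq\mu\}$, it suffices to show that for each $g\in G\setminus A$ there are $i$ and $m\in M_{\context[O]_{i}}$ with $A\subseteq\mu$ and $g\notin\mu$, where $\mu\coloneqq\varphi_{\context}\bigl(\sigma^{-1}_{i}(\{m\}^{I_{\context[O]_{i}}})\bigr)$; this gives $\varphi_{\Ocon}(A)=A$, i.e.\ $A\in\Ext(\Ocon)$. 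As the covering reflects every concept, $A=\varphi_{\context}\bigl(\sigma^{-1}_{j}(D)\bigr)$ for some $j$ and some $D\in\Ext(\context[O]_{j})$; fullness of $\sigma_{j}$ makes $\sigma^{-1}_{j}(D)$ and each $\sigma^{-1}_{j}(\{m\}^{I_{\context[O]_{j}}})$ extents of $\context{[H_{j},M]}$, so $\sigma^{-1}_{j}(D)=H_{j}\cap A$ and $\sigma^{-1}_{j}(\{m\}^{I_{\context[O]_{j}}})=H_{j}\cap\varphi_{\context}\bigl(\sigma^{-1}_{j}(\{m\}^{I_{\context[O]_{j}}})\bigr)$. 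If $g\in H_{j}$, then $g\notin H_{j}\cap A=\sigma^{-1}_{j}(D)$, so $\sigma_{j}(g)\notin D=\bigcap_{m\in D^{I_{\context[O]_{j}}}}\{m\}^{I_{\context[O]_{j}}}$, hence $\sigma_{j}(g)\notin\{m\}^{I_{\context[O]_{j}}}$ for some $m$ in the intent of $D$; for that $m$ we get $A=\varphi_{\context}(\sigma^{-1}_{j}(D))\subseteq\mu$ from $D\subseteq\{m\}^{I_{\context[O]_{j}}}$, while $g\notin\sigma^{-1}_{j}(\{m\}^{I_{\context[O]_{j}}})=H_{j}\cap\mu$ together with $g\in H_{j}$ gives $g\notin\mu$, as required.

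I expect the remaining case $g\notin H_{j}$ to be the main obstacle: the block witnessing the coverage of $A$ does not ``see'' $g$, so a separating attribute must come from another block. The plan here is to apply the covering hypothesis to a concept lying above $A$ but contained in no extent that avoids $g$ --- concretely $\varphi_{\context}(A\cup\{g\})$, or, arguing by downward induction through $\BV(\context)$, the unique upper cover of $A$ when $A$ is meet-irreducible (the meet-reducible case being settled at once by the intersection-closedness of $\Ext(\Ocon)$ and the induction hypothesis) --- and then to unwind fullness of the associated $\sigma_{\ell}$ together with the attribute-extent decomposition of its witnessing extent in $\context[O]_{\ell}$ in order to extract an $m\in M_{\context[O]_{\ell}}$ with $A\subseteq\mu$ and $g\notin\mu$. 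The delicate point, which I expect to carry the proof, is that reflecting \emph{every} concept rules out the bad configuration in which all $\Ocon$-attribute extents containing $A$ also contain $g$, i.e.\ that the $\varphi_{\context}$-closures hard-wired into the incidences $I_{\context[O]_{\ell},\varphi_{\context}}$ never ``over-merge'' $g$ into the $\Ocon$-closure of $A$. Once $\Ext(\context)=\Ext(\Ocon)$ is established, the equivalence --- and with it the claim that $\Ocon$ is an ordinal motif basis of $\context$ --- follows from the reduction in the first paragraph.
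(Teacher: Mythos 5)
Your reduction to the single identity $\Ext(\Ocon)=\Ext(\context)$ and your proof of the easy inclusion $\Ext(\Ocon)\subseteq\Ext(\context)$ coincide with the paper's argument, which phrases the same two steps as ``the identity map is a full scale-measure from $\context$ to $\Ocon$''. The gap you flag in the converse inclusion is genuine, and the paper does not close it either: its entire treatment of that direction is the sentence that the second requirement ``follows from $c=\abs{\BV(\context)}$''. The covering hypothesis only yields that each $A\in\Ext(\context)$ equals $\varphi_{\context}\bigl(\sigma_j^{-1}(D)\bigr)$ for some block $j$ and some $D\in\Ext(\mathbb{O}_j)$, whereas the extents of $\Ocon$ are intersections of the sets $\varphi_{\context}\bigl(\sigma_i^{-1}(\{m\}^{I_{\mathbb{O}_i}})\bigr)$; since $\varphi_{\context}$ does not distribute over intersections, one only gets $A\subseteq\bigcap_{m}\varphi_{\context}\bigl(\sigma_j^{-1}(\{m\}^{I_{\mathbb{O}_j}})\bigr)$, possibly strictly. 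Your case $g\in H_j$ is handled correctly, but the remaining case $g\notin H_j$ is not a removable technicality: the ``bad configuration'' you describe can actually occur, so the statement fails without additional hypotheses.

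Concretely, let $G=\{1,2,3,4\}$ with four attributes whose extents are $\{1,2,4\}$, $\{1,3,4\}$, $\{2,3\}$ and $\{1\}$, so that $\Ext(\context)=\{\emptyset,\{1\},\{2\},\{3\},\{1,4\},\{2,3\},\{1,2,4\},\{1,3,4\},G\}$. The identity on $H_1=\{1,2,3\}$ is a surjective local full scale-measure into $\mathbb{B}_3$ (the induced subcontext has all eight subsets as extents), and $2\mapsto a$, $4\mapsto b$ on $H_2=\{2,4\}$ is one into $\mathbb{B}_2$; together they cover all nine extents, so $c=\abs{\BV(\context)}$. The attribute extents of the resulting $\Ocon$ are $\{2,3\}$, $\{1,3,4\}$, $\{1,2,4\}$, $\{1,4\}$ and $\{2\}$. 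Every one of these containing the object $1$ also contains $4$, hence $\varphi_{\Ocon}(\{1\})=\{1,4\}$ and $\{1\}\notin\Ext(\Ocon)$, although $\{1\}\in\Ext(\context)$; indeed $\{1\}$ is covered only by the $\mathbb{B}_3$ block, whose domain does not contain the separating object $4$ --- exactly your unresolved case. Consequently $(\mathrm{id}_G,\context)$ is a local full scale-measure from $\context{[G,M]}$ to $\context$ but not even a scale-measure from $\Ocon$ to $\context$, so the claimed equivalence fails. Your proof therefore cannot be completed as written, but the obstruction lies in the proposition (or in a missing hypothesis, e.g.\ that every extent of $\context$ is an intersection of closures of preimages of attribute extents, or that the $\sigma_i$ are global), not in your argument.
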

\begin{proof}
  We have to show that the identity map is a full scale-measure from
  $\context$ to $\context[O]$. Hence, we need to prove that all
  attribute extents of $\Ocon$ are extents in $\context$
  \cite[Proposition 20]{smeasure} and each extent of $\context$ is an
  extent of $\Ocon$. For an attribute $m\in M_{\Scon_{i}}$ is
  $\{m\}^{I_{\Scon_{i},\varphi}}\in \Ext(\context)$ per
  definition. The second requirement follows from the fact that
  $c=\abs{\BV(\context)}$.
\end{proof}

The just introduced basis is a useful tool when investigating
scale-measures of a context $\context$ given a set of ordinal motifs
$\mathcal{O}$. One can perceive $\mathcal{O}$ as a set of analytical
tools and the existence of $\context[O]$ implies that a found ordinal
motif covering
$\{(\sigma_1,\context[O]_1),\dots,(\sigma_{k},\context[O]_{k})\}$ is
complete with respect to scale-measures of $\context$.

\subsection{Scaling Dimension Complexity}
An interesting problem based on the ordinal motif covering for
(non-local) scale-measures is to determine the smallest number $k$
such that $c=\abs{\BV(\context)}$. This number is also the
\emph{scaling dimension}~\cite{scaling-dimension} of $\context$ with
respect to the family of scale contexts $\mathcal{O}$. Note that the
scaling dimension for a given context $\context$ and family of scales
$\mathcal{O}$ does not need to exist. In the following we recall the
scaling dimension problem in the language scale-measures.

\begin{problm}[Scaling Dimension Problem \cite{scaling-dimension}]
  For a context $\context$ and a family of scale contexts
  $\mathcal{O}$, what is the smallest number $d\in \mathbb{N}$ of scale contexts
  $\mathbb{S}_{1},\dots,\mathbb{S}_{d}\in \mathcal{S}$, if
  existent, such that $\context$
  accepts a full scale-measure into the semi-product
  \[\underset{1\leq i \leq d}{\Semi}\mathbb{S}_i.\]
\end{problm}

The scaling dimension can be understood as a measurement for the
complexity of deriving explanations for a formal context based on
scale-measures and a set of ordinal motifs.
However, determining the scaling dimension is a combinatorial problem
whose related decision problem is NP-complete, as can be seen in the
following.

\begin{theorem}[Scaling Dimension Complexity]
  Deciding for a context $\context$ and a set of ordinal motifs
  $\mathcal{O}$ if the scaling dimension is at most $d\in \mathbb{N}$
  is NP-complete.
\end{theorem}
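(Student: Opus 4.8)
The plan is to establish membership in NP first, then NP-hardness by a reduction. Membership in NP: a certificate consists of the $d$ scale contexts $\mathbb{S}_1, \dots, \mathbb{S}_d$ chosen from $\mathcal{O}$ together with the full scale-measure $\sigma$ from $\context$ into the semi-product $\Semi_{1\le i\le d}\mathbb{S}_i$. Since verifying that a given map is a full scale-measure is in $\Pclass$ (this was used already in the proof of the "Recognizing Standard Scales" proposition, citing \cite{ordinal-motif}), and the semi-product of $d$ contexts from $\mathcal{O}$ has size polynomial in the input when $d$ is bounded by $|M|$ (and one can argue $d$ never needs to exceed $|M|$, since the attribute set of $\context$ itself gives a trivial scale-measure, so WLOG $d \le |M|$), the certificate is polynomial-size and checkable in polynomial time.

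For NP-hardness I would reduce from a known NP-complete covering-type problem — the natural candidate is \textsc{Set Cover} (or equivalently \textsc{Dominating Set}), since the scaling dimension is asking for a minimum number of pieces whose combination "covers" all of $\Ext(\context)$. Given a \textsc{Set Cover} instance with universe $U$, family $\mathcal{F} = \{F_1, \dots, F_m\}$, and target $d$, the idea is to build a context $\context$ whose extent structure forces any full scale-measure into a $d$-fold semi-product to correspond exactly to a choice of $d$ sets from $\mathcal{F}$ covering $U$, and to let $\mathcal{O}$ be the family consisting of one scale context $\mathbb{S}_{F_i}$ per set $F_i$ (plus possibly nominal/Boolean "padding" scales). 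Each $\mathbb{S}_{F_i}$ should be designed so that a full scale-measure of $\context$ into $\mathbb{S}_{F_i}$ "resolves" precisely the elements/attributes associated with $F_i$; then, because a full scale-measure into a semi-product $\Semi_{i}\mathbb{S}_i$ exists iff the collection of pulled-back extent systems generates all of $\Ext(\context)$ (the standard semi-product/scale-measure correspondence from \cite{smeasure,scaling-dimension}), having such a measure into a $d$-fold semi-product is equivalent to $d$ chosen sets jointly covering $U$.

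The main obstacle — and the step deserving the most care — is engineering the gadget contexts $\mathbb{S}_{F_i}$ and the master context $\context$ so that the equivalence is exact in both directions: no "accidental" full scale-measure using fewer than $d$ scales (which would require that small sub-semi-products cannot recover all extents unless the corresponding sets cover $U$), and conversely that any genuine cover does yield a valid full scale-measure. Controlling the extent lattice of $\context$ tightly enough typically means adding attributes that individually single out each universe element, so that $\Ext(\context)$ is (roughly) the Boolean-like system on $U$ plus the sets $F_i$; then a semi-product of scales captures $\Ext(\context)$ iff the chosen $F_i$ separate all pairs / generate the needed atoms, which is where the \textsc{Set Cover} constraint bites. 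I would also need to double-check that the padding scales in $\mathcal{O}$ do not let one "cheat" the count, e.g. by restricting $\mathcal{O}$ to exactly the $m$ gadget scales and arguing any full scale-measure must use only these. Once the reduction is shown polynomial and the equivalence "scaling dimension $\le d$ $\iff$ \textsc{Set Cover} has a solution of size $\le d$" is verified, combined with NP-membership this yields NP-completeness.
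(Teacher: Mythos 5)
Your NP-membership argument is essentially the paper's: guess the $d$ scale contexts and the $d$ component maps, and verify the full scale-measure property in $\Pclass$ using the result from \cite{ordinal-motif}. Your additional care about bounding $d$ (so that the certificate stays polynomial) is a reasonable refinement that the paper glosses over.

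The hardness direction, however, has a genuine gap, and one you acknowledge yourself: the entire reduction from \textsc{Set Cover} hinges on gadget contexts $\Scon_{F_i}$ and a master context $\context$ with the property that a full scale-measure into a $d$-fold semi-product exists \emph{iff} $d$ of the sets cover $U$, and you never construct them. This is not a routine detail. A full scale-measure demands that the pulled-back extent systems of the chosen scales generate $\Ext(\context)$ \emph{exactly}, and the maps $\sigma_i$ into the gadgets are chosen adversarially, so you must simultaneously rule out unintended maps into your gadgets and unintended generation of extents by small sub-semi-products. Until that construction is exhibited and both directions verified, you have a plan, not a proof. The paper avoids all of this with a one-line reduction: the problem of recognizing a full scale-measure from $\hat\context$ into a single given scale $\hat\Scon$ (RfSM) is already NP-complete by \cite{ordinal-motif}, so setting $\context\coloneqq\hat\context$, $\mathcal{O}\coloneqq\{\hat\Scon\}$ and $d=1$ immediately yields hardness, since a semi-product with one operand is just that operand. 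In other words, the hardness of the scaling dimension problem comes entirely from the hardness of placing a single scale, not from the covering/selection aspect; your reduction attempts to locate the hardness in the combinatorics of choosing which scales to use, which would be a stronger and different claim and would require the full gadget engineering you have deferred. I would recommend either completing that construction in detail or switching to the direct reduction from RfSM.
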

\begin{proof}
  To show NP-hardness we reduce the recognizing full scale-measure
  problem (RfSM)~\cite{ordinal-motif} to it.

  For two input contexts $\hat\context$ and $\hat\Scon$ of the RfSM
  let context $\context\coloneqq {\hat\context}$. We map
  ${\hat context}$ to $\context$ and ${\hat \context[S]}$ to the set
  of ordinal motifs $\mathcal{O}\coloneqq \{\hat\Scon\}$ and set
  $d = 1$. This map is polynomial in the size of the input.

  If there is a full scale-measure from $\hat\context$ into
  $\hat\Scon$ we can deduce that there is a full scale-measure of
  $\context$ into the semi-product that has only one operand and is
  thus just one element of $\mathcal{O}$. Hence, this element is
  $\hat\Scon$ and therefore the scaling dimension is at most one. The
  inverse can be followed analogously.

  An algorithm to decide the scaling dimension problem can be given by
  non-deterministically guessing $d$ scale contexts
  $\context[S]_{1},\dots,\context[S]_{d}\in \mathcal{O}$ and $d$
  mappings from $\sigma_{i}=G_{\context}\to G_{\Scon_{i}}$. These are
  polynomial in the size of the input. The verification for
  full scale-measures in $\Pclass$~\cite{ordinal-motif}.
\end{proof}


\subsection{Ordinal Motif Covering with Standard Scales}
The ordinal motif covering problem is a combinatorial problem which is
computationally costly even for standard scales. Thus, we propose in
the following a greedy approach which has two essential steps. First,
we compute all full scale-measures $\mathcal{S}$ which is
computationally tame due to the heredity of local full scale-measures
for standard scales. Our goal is now to identify in a greedy manner
elements of $\mathcal{S}$ that increase $c$ of the ordinal motif
covering the most. Thus, secondly, we select $k$ full scale-measures where
at each selection step $i$ with $1\leq i\leq k$ we select a
scale-measure $(\sigma,\context[O])\in \mathcal{S}$ that maximizes
\cref{eq:1}.

\begin{equation}
  \label{eq:1}
  \Abs{(\varphi_{\context} \circ \sigma^{-1}) \bigl(\Ext(\context[O]) \bigr)
    \setminus \bigcup_{1\leq j < i} (\varphi_{\context} \circ
    \sigma_{j}^{-1}) \bigl( \Ext(\context[O]_{j}) \bigr)  }
\end{equation}
In the above equation $(\sigma_{j},\context[O]_{j})$ denotes the
scale-measure that was selected at step $j\leq i$. The union is the
covering number $c$ of the ordinal motif covering
$(\sigma_1,\context[O]_1),\dots,(\sigma_{i-1},\context[O]_{i-1})$.
Overall, the computed cardinality is equal to the number of concepts
reflected by $(\sigma,\context[O])$ that are not already reflected by
$(\sigma_1,\context[O]_1),\dots,(\sigma_{i-1},\context[O]_{i-1})$.

For obvious reasons this approach results in the selection of
scale-measures that have the largest number of (so far) uncovered
concepts. A downside of this heuristic is that it favors ordinal
motifs that have in general more concepts, e.g., contranominal scales
over ordinal scales. To compensate for this we propose to normalize
the heuristic by the number of concepts of the ordinal motif, i.e.,
$\Abs{\sigma^{-1}\bigl(\Ext(\context[O]) \bigr)}$.

In the first step, the normalized heuristic does not account for the
total size of the ordinal motif. The first selected scale-measure
covers at least the top extent, i.e., $G$, and thus the scores for all
following ordinal motifs are at most
$\nicefrac{\abs{\Ext(\Scon)} - 1}{\abs{\Ext(\Scon)}}$.

\section{Human-Centered Textual Explanations}\label{sec:explain-templates}
We want to elaborate on textual explanations of concept lattices based
on principles drawn from human-computer interaction for state of the
art \emph{human-centered explanations}. One of the currently most
applied fields of these explanations in computer science is Explainable AI
(XAI)~\cite{schwalbe2023comprehensive}.  Developing explainable systems commonly begins with ``an assertion about what makes for a good explanation'' \cite{mueller2021principles}, which are not seldomly based on guidelines or collections of principles. Those principles aim to derive \emph{human-centered textual explanations}
that impart complex concepts in a manner that is accessible, relevant,
and understandable. They are designed to cater to the individual
cognitive and emotional needs of readers, anticipating their concerns
and queries.  Thereby they aim at fostering the understanding of the reader
by exposing reasoning and additional information to accompany data
structures they rely on \cite{tintarev2010designing}.  Moreover,
textual explanations based on goodness criteria in the context of
computer-generated knowledge and information help to strengthen trust
in the computed reasoning results \cite{mamun2021assessing}.

\textcite{mamun2021assessing} proposed five goodness criteria for
explainability in the context of machine learning models.  We identify
them as adaptable to our task for textual explanations of concept
lattices.
The first criterion is \emph{accuracy}, which requires that an
explanation is a valid reflection of the underlying data.
\cite{papenmeier2019model}.
The second criterion is \emph{scope}, which refers to the level of
detail in the explanation, which can vary from explaining a single
action to a global description of a system, depending on the tasks and
needs of the reader.
The third criterion relates to the type of question the explanation
answers, which is called the \emph{explanation form criterion}. The
questions can be of type ``what\ldots'', ``why\ldots'', ``why
not\ldots'', ``what if\ldots'', or ``how to\ldots''.  This is related
to the so-called explanation triggers identified by
\textcite{mueller2019explanation}. In their study,
\textcite{mamun2021assessing} found that many explanations in
Explainable AI contexts were ``what'' statements.
The fourth criterion is \emph{simplicity}, which emphasizes the
importance of making an explanation easy to read and understand (e.g.,
\textcite{kulesza2015principles}). \textcite{mamun2021assessing}
suggested testing the appropriate readability level by comparing the
grade level of other related content with one's explanations.
Finally, the fifth criterion is the \emph{knowledge base criterion},
which emphasizes the importance of providing workable knowledge in the
explanation. Thus, explanations should predominantly be written as
factual statements \cite{mamun2021assessing}.
In the following, we first propose our textual explanation templates
for standard scales and afterwards discuss how the principles above
are implemented in their design.

\begin{description}
\item[Nominal Scale:] The elements $n_1,\dots,n_{k-1}$ and $n_k$ are
  incomparable, i.e., all elements have at least one property that the other
  elements do not have.
\item[Ordinal Scale:] There is a ranking of elements
  $n_1,\dots,n_{k-1}$ and $n_k$ such that an element has all the
  properties its successors has.
\item[Interordinal Scale:] The elements $n_1,\dots,n_{k-1}$ and $n_k$
  are ordered in such a way that each interval of elements has a
  unique set of properties they have in common.
\item[Contranominal Scale:] Each combination of the elements
  $n_1,\dots,n_{k-1}$ and $n_k$ has a unique set of properties they
  have in common.
\item[Crown Scale:] The elements $n_1,\dots,n_{k-1}$ and $n_k$ are
  incomparable. Furthermore, there is a closed cycle from $n_1$, over
  $n_2,\dots n_{k-1}$ and $n_k$ back to $n_1$ by pairwise shared
  properties.
\end{description}

We motivate how our setup based on scale-measures
relates to the goodness criteria above.

\begin{description}
\item[Accuracy] The generation of textual explanations are based on ordinal
  motif coverings with scale-measures, i.e., continuous maps
  between closure spaces. These maps do not introduce any conceptual
  error~\cite{conceptual-scaling-error}.  Moreover, ordinal motif
  coverings can function as a basis for the complete conceptual
  structure of the data set with respect to
  \cref{prop:ordinal-motif-basis}.  Therefore an accurate mapping of
  an explanation onto the represented information is guaranteed.
\item[Scope] For the scope of the introduced explanations we differed
  between global and local explanations which is determined by the
  choice of scale-measures, i.e., local vs non-local.  In addition to
  that we can differentiate between two kinds of \texttt{coverage},
  i.e., full and non-full scale-measures.  However, with our
  experiments and the ordinal motif covering we focus mainly on local
  full explanations.

  Altogether, we can serve different task requirements with the
  explanations.
\item[Explanation Form] The main question addressed by ordinal motifs
  is dependent on the type of scale-measure. For full scale-measures
  we answer the question on \emph{``What is \emph{the} conceptual
    relation between a given set of objects.''} and for non-full
  scale-measures we answer \emph{``What is \textbf{a} conceptual
    relation between a given set of objects.''}.
\item[Simplicity] The presented explanations are written using terms
  familiar for readers with basic knowledge about graphs and
  mathematical descriptions.  Formulations that require prior
  knowledge about conceptual structures have been avoided. In addition
  to that, the textual structure is kept simple and explanations are
  composed of at most two short sentences.
\item[Knowledge Base] The generated textual explanations describe the
  conceptual relations between objects and can thus be considered to
  be factual statements.
\end{description}

All proposed textual explanations are designed to be applicable in
every data domain that is representable by formal contexts. However,
different data domains and applications come with different
requirements for the design of human-centered textual
explanations. Thus, a development of domain specific explanations for
a large variety of settings is advisable.
Given more general principles of HCI~\cite{chao2009human}, user
studies with the prospective users of a system are the gold standard
in evaluating any kind of interaction~\cite{mamun2021assessing}. Since
the focus of this work is to introduce the theoretical foundation on
how to derive human-centered explanations we deem the execution of a
user study future work.

\section{Application Example}\label{sec:experiment}
To show the applicability of our method, we compute the ordinal motif
covering for the \emph{spices planner} data set~\cite{pqcores}. This
context contains fifty-six meals as objects and thirty-seven spices
and food categories as attributes.  The context has 531 formal
concepts for which we found over ten-thousands local full
scale-measures into standard scales.In \cref{tab:identify-motifs} we
recall results~\cite{ordinal-motif} on how many local full
scale-measures there are per family of standard scales.  The most
frequent ordinal motif of the spices planner context is the
interordinal motif. The motif having the largest scale size is the
nominal scale motif, which includes up to nine objects.  There are no
non-trivial ordinal scale motifs in the spices planner context, i.e.,
the size of all local full scale-measure domains into ordinal scales
within the spices planner context is one.  Therefore we exclude the
ordinal scales from the following analysis.

\begin{table}
  \centering
  \caption{Results for ordinal motifs~\cite{ordinal-motif} of the 
    spices planner context. Every column represents ordinal motifs of
    a particular standard scale family.  Maximal lf-sm is the number
    of local full scale-measures for which there is no lf-sm with a
    larger domain.  Largest lf-sm refers to the largest domain size
    that occurs in the set of local full scale-measures. }
  \begin{tabular}{|l|r|r|r|r|r|}
    \hline
    &nominal&ordinal&interordinal&contranominal&crown\\\hline \hline
    local full sm&2342&37&4643&2910&2145\\ \hline
    maximal lf-sm&527&37&2550&1498&2145\\ \hline
    largest lf-sm&9&1&5&5&6\\ \hline
  \end{tabular}
  \label{tab:identify-motifs}
\end{table}

In our experiment we applied the introduced greedy strategy. In
\cref{fig:coverage} we report the extent sizes of selected ordinal
motifs.  In the left diagram we depict in the abscissa the steps of
the greedy selection and in the ordinate the number of newly covered
concepts. We report the results for the standard scales individually
and combined, for the later we also experimented with the normalized
heuristic. In the right diagram we depict the accumulated values,
i.e., the value $c$.  First we observe that the normalized heuristic
does not decrease monotonously in contrast to all other results.
From the right diagram we can infer that the crown, interordinal
and nominal are unable to cover all extents.
The contranominal and the combined scale family took the fewest
selection steps to achieve complete extent coverage.
This followed by the normalized heuristic on the combined scale family
which about thirty percent more steps.
Out of the other scale families the crown scales achieved the highest
coverage followed by the interordinal and nominal scales.

\begin{figure}
  \centering
  \includegraphics[width=0.49\linewidth,trim={1cm 0cm 2cm 2cm},clip]{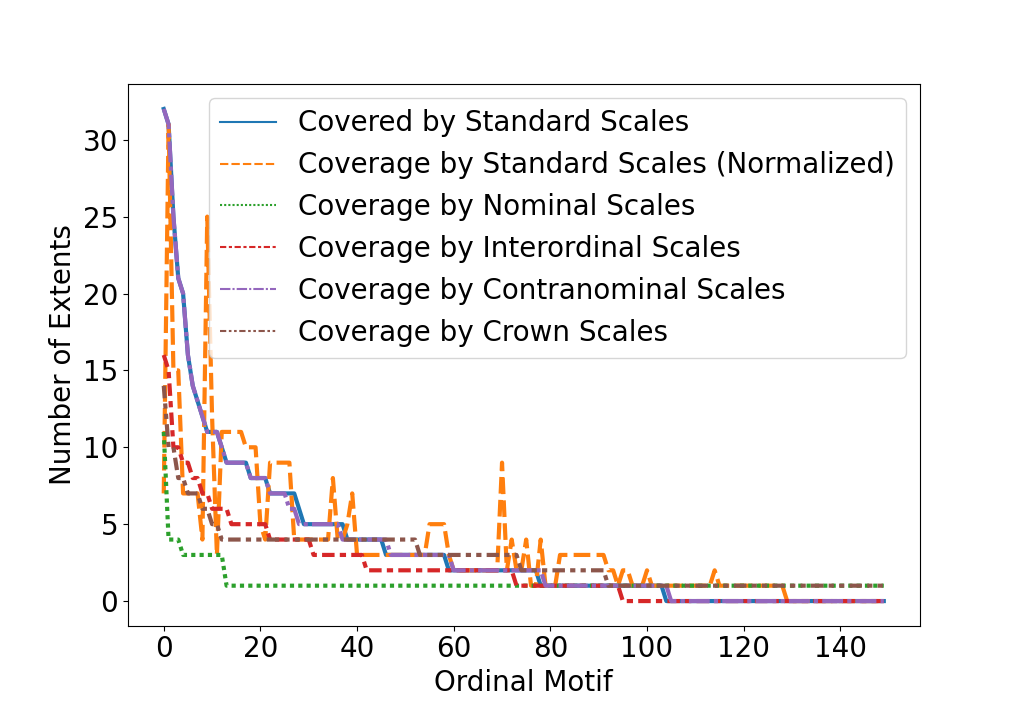} 
  \includegraphics[width=0.49\linewidth,trim={0.8cm 0cm 2cm 2cm},clip]{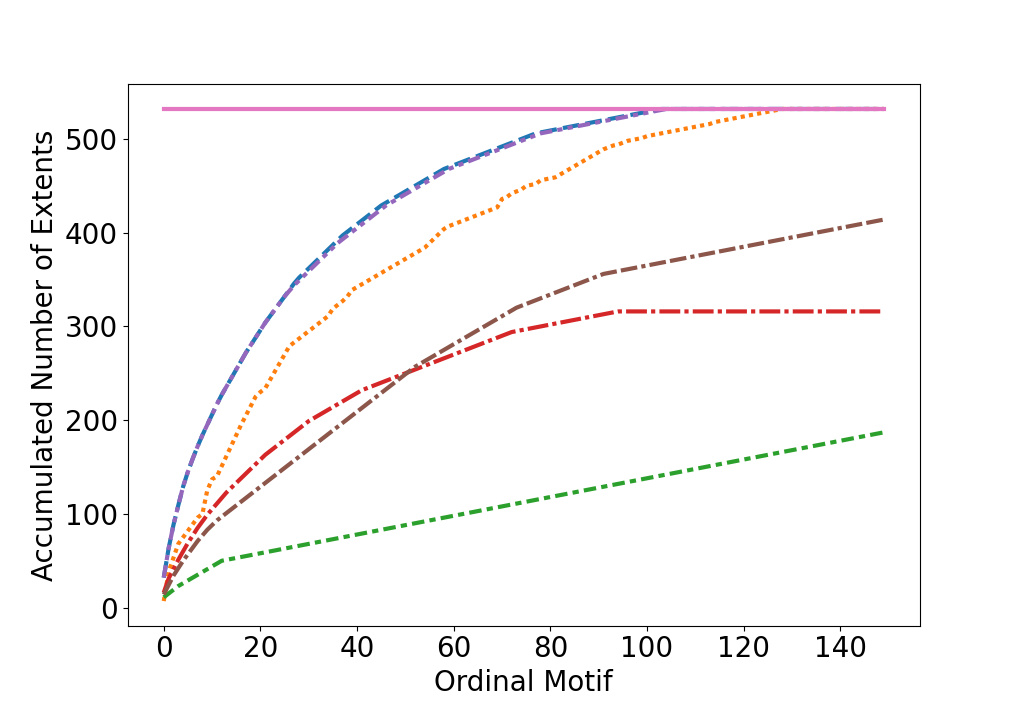}   
  \caption{The extent coverage (left) for the ordinal motif covering
    computation for all and each standard scale family
    individually. The right diagram displays the accumulated coverage
    at each step in the ordinal motif covering computation. The legend
    of the left diagram does also apply to right diagram with the
    addition of the total number of extents (pink) in the context.}
  \label{fig:coverage}
\end{figure}

With \cref{fig:ratio} we investigate the influence of the
normalization on the greedy selection process.  For this we depict the
relative proportion of selected scale types up to a step $i$
(abscissa).  The left diagram shows the proportions for the standard
heuristic and the right reports the proportions for the normalized
heuristic.  We count ordinal motifs that belong to multiple standard
scale families relatively. For example we count the contranominal
scale of size three half for the crown family.  In the first diagram
we see that a majority of the selected ordinal motifs are of
contranominal scales.  This is not surprising since they have the most
concepts among all standard scales.  The interordinal and crown scales
are almost equally represented and the nominal motifs are the least
frequent. In contrast to this the normalized heuristic selects crown
and interordinal motifs more frequently (right diagram).

Overall we would argue that while the normalized heuristic produces
slightly worse coverage scores they provide a more diverse selection
in terms of the standard scales. Therefore, the normalized heuristic
may result in potentially more insightful explanations.

\begin{figure}
  \centering
  \includegraphics[width=0.49\linewidth,trim={1cm 0cm 2cm 2cm},clip]{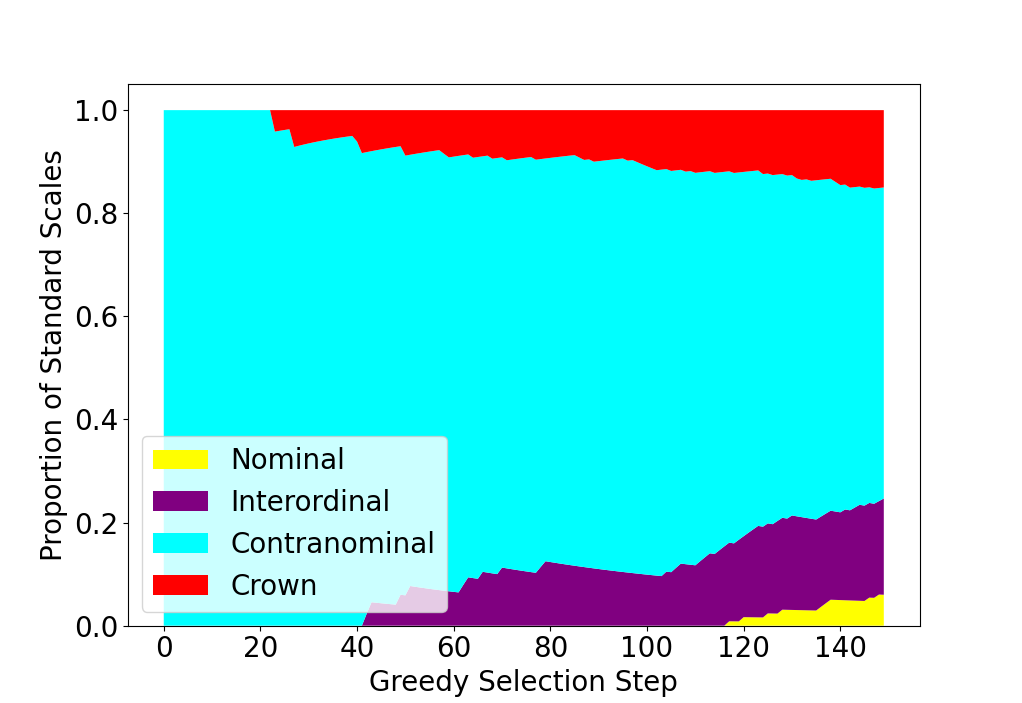} 
  \includegraphics[width=0.49\linewidth,trim={1cm 0cm 2cm 2cm},clip]{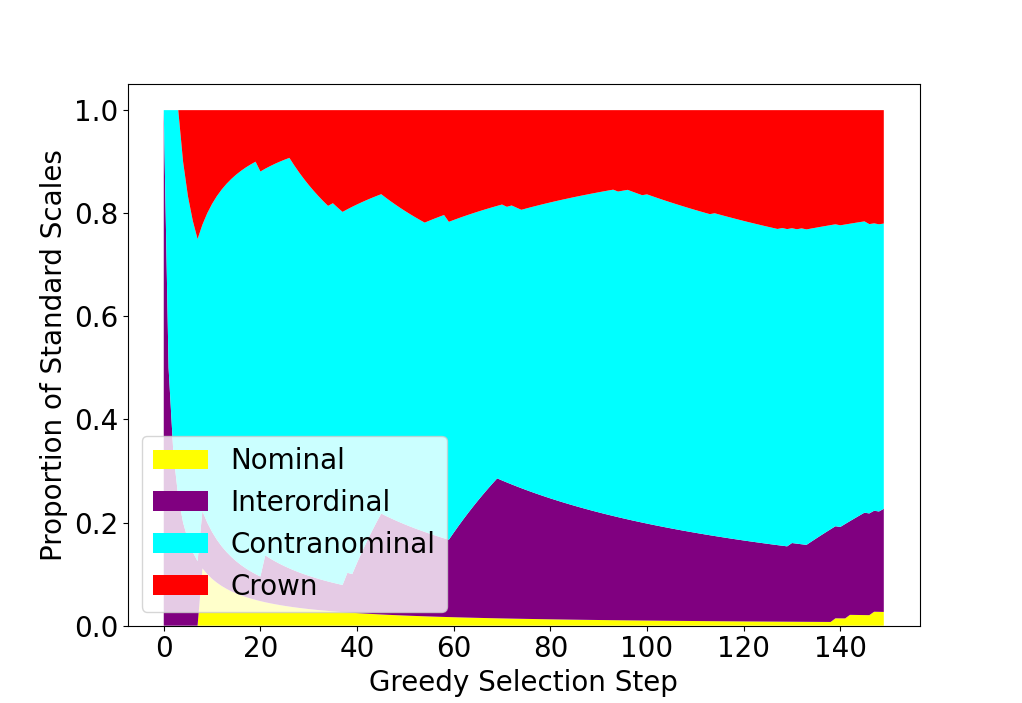}   
  \caption{The ratio of each standard scale family in the ordinal motif covering computation for the standard (left) and normalized heuristic.}
  \label{fig:ratio}
\end{figure}

We conclude by providing automatically generated textual explanations
for spices planner context. For this we report the top ten selections
for the standard and normalized heuristic. First we depict the
explanations for the standard heuristic which consist solely of
contranominal motifs. Thereafter we will turn to the normalized
heuristic results.

\begin{enumerate}[1.]
\item Each combination of the elements \emph{Thyme}, \emph{Sweet
    Paprika}, \emph{Oregano}, \emph{Caraway} and \emph{Black Pepper} has
  a unique set of properties they have in common.
\item Each combination of the elements \emph{Curry}, \emph{Garlic},
  \emph{White Pepper}, \emph{Curcuma} and \emph{Cayenne Pepper} has a
  unique set of properties they have in common.
\item Each combination of the elements \emph{Paprika Roses},
  \emph{Thyme}, \emph{Sweet Paprika}, \emph{White Pepper} and
  \emph{Cayenne Pepper} has a unique set of properties they have in
  common.
\item Each combination of the elements \emph{Paprika Roses},
  \emph{Thyme}, \emph{Allspice}, \emph{Curry} and \emph{Curcuma} has a
  unique set of properties they have in common.
\item Each combination of the elements \emph{Thyme}, \emph{Basil},
  \emph{Garlic}, \emph{White Pepper} and \emph{Cayenne Pepper} has a
  unique set of properties they have in common.
\item Each combination of the elements \emph{Tarragon}, \emph{Thyme},
  \emph{Oregano}, \emph{Curry}, and \emph{Basil} has a unique set of
  properties they have in common.
\item Each combination of the elements \emph{Vegetables},
  \emph{Caraway}, \emph{Bay Leef} and \emph{Juniper Berries} has a
  unique set of properties they have in common.
\item Each combination of the elements \emph{Meat}, \emph{Garlic},
  \emph{Mugwort} and \emph{Cloves} has a unique set of properties they
  have in common.
\item Each combination of the elements \emph{Oregano}, \emph{Caraway},
  \emph{Rosemary}, \emph{White Pepper} and \emph{Black Pepper} has a
  unique set of properties they have in common.
\item Each combination of the elements \emph{Curry}, \emph{Ginger},
  \emph{Nutmeg} and \emph{Garlic} has a unique set of properties they
  have in common.
\end{enumerate}

These explanations cover a total of 195 concepts out of 531.  An
interesting observation is that explanation number eight has only four
objects compared to the five objects of explanation number nine. 
Yet, explanation eight was selected first.
The reason for this is that number eight has more non-redundant concepts
with respect to the previous selections.

The results for the normalized heuristic are very different compared
to the standard heuristic.  The ten selected motifs cover a total of
125 concepts. They consist of one interordinal motif, four
contranominal, one nominal and four motifs that are crown and
contranominal at the same time. For the ordinal motifs that are of
crown and contranominal scale we report explanations for both.

\begin{enumerate}[1.]
\item The elements \emph{Thyme}, \emph{Caraway} and \emph{Poultry} are
  ordered in such a way that each interval of elements has a unique set of
  properties they have in common. 
\item Each combination of the elements \emph{Curry}, \emph{Garlic},
  \emph{White Pepper}, \emph{Curcuma} and \emph{Cayenne Pepper} has a
  unique set of properties they have in common.
\item Each combination of the elements \emph{Allspice},
  \emph{Ginger},\emph{Mugwort} and \emph{Cloves} has a unique set of
  properties they have in common.
\item  Each combination of the elements \emph{Sweet Paprika},
  \emph{Oregano}, \emph{Rosemary} and \emph{Black Pepper} has a unique
  set of properties they have in common.
\item Each combination of the elements \emph{Sauces}, \emph{Basil} and
  \emph{Mugwort} has a unique set of properties they have in common.\\
  The elements \emph{Basil}, \emph{Sauces} and \emph{Mugwort} are
  incomparable. Furthermore, there is a closed cycle from \emph{Basil}
  over \emph{Sauces} and \emph{Mugwort} back to \emph{Basil} by
  pairwise shared properties.
\item  Each combination of the elements \emph{Paprika Roses}, \emph{Meat} and
  \emph{Bay Leef} has a unique set of properties they have in common.\\
  The elements \emph{Paprika Roses}, \emph{Meat} and \emph{Bay Leef}
  are incomparable. Furthermore, there is a closed cycle from
  \emph{Paprika Roses} over \emph{Meat} and \emph{Bay Leef} back to
  \emph{Paprika Roses} by pairwise shared properties.
\item  Each combination of the elements \emph{Saffron}, \emph{Anisey} and
  \emph{Rice} has a unique set of properties they have in common.\\
  The elements \emph{Saffron}, \emph{Anisey} and \emph{Rice}
  are incomparable. Furthermore, there is a closed cycle from
  \emph{Saffron} over \emph{Anisey} and \emph{Rice} back to
  \emph{Saffron} by pairwise shared properties.
\item Each combination of the elements \emph{Vegetables},
  \emph{Savory} and \emph{Cilantro} has a unique set of properties
  they have in common.  \\
  The elements \emph{Savory}, \emph{Cilantro} and \emph{Vegetables}
  are incomparable. Furthermore, there is a closed cycle from
  \emph{Savory} over \emph{Cilantro} and \emph{Vegetables} back to
  \emph{Savory} by pairwise shared properties.
\item The elements \emph{Tarragon}, \emph{Potatos} and \emph{Majoram}
  are incomparable, i.e., all elements have at least one property that
  the other elements do not have.
\item Each combination of the elements \emph{Paprika Roses},
  \emph{Thyme}, \emph{Sweet Paprika}, \emph{White Pepper} and
  \emph{Cayenne Pepper} has a unique set of properties they have in
  common.
\end{enumerate}

\section{Conclusion}
To the best of our knowledge our presented method is the first
approach for the automatic generation of textual explanations of
concept lattices. It is a first step towards making Formal Concept
Analysis accessible to users without prior training in mathematics.
Our contribution comprises the theoretical foundations as well as the
preparation of human-centered textual explanations for ordinal motifs
of standard scale.

In particular, we have shown that the recognition of standard scales
can be done in polynomial time in the size of the context. This is
also the case when the standard scale has exponential many concepts.
This is a positive result for the generation of textual explanations of
large real world data sets.

Based on ordinal motif coverings we are able to limit the generated
textual explanations to a low number of non-redundant conceptual
relations. In detail, we proposed a greedy method for the computation
of ordinal motif coverings based on two heuristics.

To asses the complexity of potential textual explanations of a concept
lattice, we showed the relation between ordinal motif coverings and
the scaling dimension. For the later we proved that the computational
complexity of the related decision problem is NP-complete.

Accompanying our theoretical investigation, we derived criteria on how
to derive textual explanations for ordinal motifs with principles from
human-computer interaction. In addition to that, we demonstrated the
applicability of our approach based on a real world data set. 

As a next logical step, we envision a participatory user study. This
will lead to improved textual explanations for ordinal motifs that are
easier to comprehend by humans. Moreover, the development of domain
specific textual explanations may increase the number of applications for
our proposed methods.


\printbibliography

\end{document}